  \providecommand\BibTeX{{%
    \normalfont B\kern-0.5em{\scshape i\kern-0.25em b}\kern-0.8em\TeX}}}
\newenvironment{proof} {\textsc{Proof}\quad} {\hfill $\blacksquare$\\}
\definecolor{citecolor}{rgb}{0.5,0.5,0.5}
\newcommand{\Ag}{\textbf{I}}
\newcommand{\Var}{\ensuremath{\mathbf{V}}}
\newcommand{\M}{\mathcal{M}}
\newcommand{\N}{\mathcal{N}}
\newcommand{\lr}[1]{\langle #1 \rangle}
\newcommand{\AxTr}{\ensuremath{\mathtt{T}}}
\newcommand{\PR}{\ensuremath{\mathtt{PR}}}
\newcommand{\NM}{\ensuremath{\mathtt{NM}}}
\newcommand{\AxTrans}{\ensuremath{\mathtt{4}}}
\newcommand{\AxEuc}{\ensuremath{\mathtt{5}}}
\newcommand{\Kv}{\ensuremath{\mathsf{Kv}}}
\newcommand{\PAL}{\textsf{PAL}}
\newcommand{\E}{{\mathcal E}}
\newcommand{\F}{{\mathcal F}}
\newcommand{\PALAS}{\textbf{PALAS}}
\newcommand{\DELAS}{\textbf{DELAS}}
\newcommand{\SPALAS}{\textsf{SPALAS}}
\newcommand{\SPALASf}{\textsf{SPALAS5}}
\newcommand{\SDELAS}{\textsf{SDELAS}}
\newcommand{\SDELASf}{\textsf{SDELAS5}}
\newcommand{\DISTK}{\ensuremath{\mathtt{DISTK}}}
\newcommand{\TAUT}{\ensuremath{\mathtt{TAUT}}}
\newcommand{\GENK}{\ensuremath{\mathtt{NECK}}}
\newcommand{\CCOM}{{\texttt{ACOM}}}
\newcommand{\ATOM}{{\texttt{AATOM}}}
\newcommand{\NEG}{{\texttt{ANEG}}}
\newcommand{\CON}{{\texttt{ACON}}}
\newcommand{\ASS}{{\texttt{AASSI}}}
\newcommand{\AK}{{\texttt{AK}}}
\newcommand{\UCCOM}{{\texttt{UCOM}}}
\newcommand{\UATOM}{{\texttt{UATOM}}}
\newcommand{\UNEG}{{\texttt{UNEG}}}
\newcommand{\UCON}{{\texttt{UCON}}}
\newcommand{\UASS}{{\texttt{UASSI}}}
\newcommand{\UAK}{{\texttt{UK}}}
\newcommand{\hK}{\ensuremath{\widehat{\K}}}
\newcommand{\AxEXEAS}{\texttt{DAS}}
\newcommand{\AxDETAS}{\texttt{DETAS}}
\newcommand{\AxKAS}{\texttt{KAS}}
\newcommand{\AxSUBAS}{\texttt{SUBAS}}
\newcommand{\AxSUBP}{\texttt{SUBP}}
\newcommand{\AxRGDP}{\texttt{RIGIDP}}
\newcommand{\AxRGDN}{\texttt{RIGIDN}}
\newcommand{\NECAS}{\ensuremath{\mathtt{NECAS}}}
\newcommand{\AxEFAS}{\ensuremath{\mathtt{EFAS}}}
\newcommand{\AxSUBtoAS}{\texttt{SUB2AS}}
\newcommand{\AxId}{\ensuremath{\mathtt{ID}}}
\newcommand{\AxSym}{\ensuremath{\mathtt{SYM}}}
\newcommand{\AxTranseq}{\ensuremath{\mathtt{TRANS}}}
\renewcommand{\PAL}{\textbf{{PAL}}}
\newcommand{\ELAS}{\textbf{BELAS}}
\newcommand{\SELAS}{\textsf{SBELAS}}
\newcommand{\SELASf}{\textsf{SBELAS5}}
\newcommand{\Nm}{\textbf{N}}
\newcommand{\DEL}{\textbf{DEL}}
\renewcommand{\phi}{\varphi}
\newcommand{\K}{\mathsf{K}}
\newcommand{\lra}{\ensuremath{\leftrightarrow}}
\newcommand{\tot}{\ensuremath{\rightarrowtail}}
\newcommand{\X}{\boldsymbol{X}}
\newcommand{\n}{\boldsymbol{N}}
\newcommand{\Fv}{\mathsf{Fv}}
\newcommand{\var}{\mathsf{Var}}
\newtheorem{theorem}{Theorem}
\newtheorem{definition}[theorem]{Definition}
\newtheorem{proposition}[theorem]{Proposition}
\newtheorem{example}[theorem]{Example}
\newtheorem{claim}{Claim}[theorem]
\newtheorem{remark}[theorem]{Remark}
\newenvironment{claimproof}{\par\noindent\textit{Proof of Claim \theclaim:}\space}{\hfill $\blacksquare$}
\renewcommand{\E}{\mathcal{E}}
\renewcommand{\phi}{\varphi}
\DeclareSymbolFont{symbolsC}{U}{txsyc}{m}{n}
\DeclareMathSymbol{\strictif}{\mathrel}{symbolsC}{74}
\newcommand{\Ps}{\textbf{P}}
\renewcommand{\Var}{\textbf{X}}
\newcommand{\AxTrK}{\ensuremath{\mathtt{T}}}
\newcommand{\Va}{\textsf{Var}}
\newcommand{\trs}{\textsf{tr}}
\title{\textit{De Re} Updates}
\author{
Michael Cohen
   \institute{Department of Philosophy\\ Stanford University, USA}
  \email{micohen@stanford.edu}  
\and 
Wen Tang 
  \institute{Department of Philosophy\\ Peking University, China}
\email{1800015421@pku.edu.cn}
\and 
Yanjing Wang\thanks{Corresponding author}
   \institute{Department of Philosophy\\ Peking University, China}
  \email{y.wang@pku.edu.cn}
}
\begin{document}
\maketitle

\begin{abstract}
In this paper, we propose a lightweight yet powerful dynamic epistemic logic that captures not only the distinction between \textit{de dicto} and \textit{de re} knowledge but also the distinction between \textit{de dicto} and \textit{de re} updates. The logic is based on the dynamified version of an epistemic language extended with the assignment operator borrowed from dynamic logic, following the work of Wang and Seligman \cite{Wangnames}. We obtain complete axiomatizations for the counterparts of public announcement logic and event-model-based DEL based on new reduction axioms taking care of the interactions between dynamics and assignments.
\end{abstract}


\section{Introduction}

Epistemic logic is very successful in capturing reasoning patterns of propositional knowledge expressed in terms of \textit{knowing that}. It has been widely applied to formal epistemology, game theory, theoretical computer science, and AI (cf. \cite{ELbook}). 

In particular, the development of dynamic epistemic logic ($\DEL$) provides a flexible framework to formally model how propositional knowledge is communicated and updated by concrete actions and events (cf. e.g., \cite{DELbook}). For example, in public announcement logic (PAL), the \textit{knowing that} modality is equipped by its dynamic counterpart of \textit{announcing that} modality, and the implicit assumptions about agents' ability of obtaining new knowledge are reflected by the interaction of these two modalities in terms of the axioms such as perfect recall and no miracles \cite{MergingJournal09,WC12}. These axioms together with other axioms about the features of updates also give rise to the so-called \textit{reduction axioms}, which can often be used to eliminate the dynamic modalities within the static epistemic logic. 

\subsection{\textit{De re} knowledge and updates}
Despite the great success of the standard epistemic logic of \textit{knowing that}, there are also other commonly used knowledge expressions such as \textit{knowing what/who/how/why} and so on, which were not well-studied in the standard framework. As already observed by Hintikka in the early days of epistemic logic, such expressions are about \textit{knowledge of objects}, or say \textit{de re} knowledge, compared to the \textit{de dicto} knowledge expressed by \textit{knowing that $\phi$} (cf. e.g., \cite{Hintikka95:KAK}). Hintikka pioneered the approach of using first-order (or higher-order) modal logic to capture such expressions \cite{Hintikka2003}, e.g., knowing who murdered Bob can be formalized as $\exists x \K \textit{Kill}(x, Bob)$, in contrast with the \textit{de dicto} knowledge that someone murdered Bob $\K\exists x \textit{Kill}(x, Bob)$.   

Inspired by Hinttika's early idea and discussions in philosophy and linguistics about embedded wh-questions \cite{SG82,stanley2001knowing}, Wang proposed to introduce the \textit{bundle modalities} that pack a quantifier and an epistemic  modality together to capture each \textit{know-wh} as a whole, instead of breaking it down into smaller components \cite{WangBKT}. This leads to a new family of (non-normal) epistemic logics of \textit{know-wh} and new decidable fragments of first-order modal logics \cite{Wang17d,Padmanabha2018}. 

Now a very natural question arises, \textbf{how do we capture the dynamics of such \textit{de re} knowledge?} More specifically, \textbf{can we repeat the success of DEL with a genuine \textit{de re} counterpart?} We hope the present paper can provide positive answers to such questions by presenting a framework which can handle both \textit{de re} and \textit{de dicto} knowledge and updates.

Let us first understand the technical difficulties in handling the \textit{de re} dynamics in the existing framework. To be more specific, consider a logic of \textit{knowing what} featuring the $\Kv$ modality introduced in the very same paper where Plaza invented  the public announcement logic ($\PAL$) \cite{Plaza89:lopc}. Given a non-rigid name $a$, $\Kv_ia$ says that agent $i$ \textit{knows (what) the value/reference of $a$ (is)}. It has a very intuitive semantics induced by its hidden first-order modal form of  $\exists x \K_i (x\approx a)$. After failing to apply the reduction method of $\PAL$, Plaza proposed the question of axiomatizing such a logic with the presence of public announcements. Wang and Fan showed that there is simply no reduction possible in Plaza's language and use a strengthened conditional $\Kv$ modality to axiomatize the logic \cite{WF13,WF14}. Note that although \textit{de dicto} announcements can possibly involve or change \textit{de re} knowledge as nicely shown in \cite{Rendsvig2019termLogicSocialNetwork,OcchipintiLiberman2020},\footnote{For example, announcing \textit{that} Bob knows \textit{what} is the value of the password.} it is not the most natural dynamic counterpart of the $\Kv_i$ operator, as the table below shows: 
\begin{center}
\begin{tabular}{c|c|c}
&knowledge & dynamics\\ 
\hline 
\textit{de dicto} & knowing that  & announcing that \\
\textit{de re} & knowing what  & announcing what 
\end{tabular}
\end{center}
Just like one can know a proposition after an announcement, one should be able to know the value of $a$ after it is announced. However, announcing the value of $a$ cannot be easily handled by \textit{announcing that}, e.g., suppose the domain of $a$ is the set of natural numbers, then you need to use infinitely many non-deterministic announcements in the form of $a\approx k$ for each $k\in\mathbb{N}$. Does it mean we need to introduce constants for all the numbers in the language? What if the value domain is uncountable? 

Given such concerns, a new \textit{announcing value} modality $[a]$ was introduced in \cite{GvEW16} with its dynamic semantics of eliminating the worlds which do not share the same value of $a$ as the designated actual world.\footnote{The modality is called the \textit{public inspection} in \cite{GvEW16}.} However, despite some success of axiomatizing $[a]$ and $\Kv$ in very restricted cases, it remains hard to capture the full logic with know-that, know-what, announce-that,  and announce-what. Unlike the announcement operator, $[a]$ does not obey the \textit{no miracles} axiom,\footnote{A typical \textit{no miracles} axiom is in the shape of $\lr{e}\K\phi\to \K[e]\phi$. It is not valid if $[e]$ is the announcing-what operator.} which is one of the pillars behind dynamic epistemic logic (cf. \cite{WC12}). One reason for this failure is that the \textit{de re} update is \textit{not global}, i.e., the updated effect depends on the value of $a$ on the world where it is executed. It also leads to the problem of not being able to reduce such a dynamic operator. One way to go around is to introduce some rigid constants and all kinds of conditional knowledge operators as in \cite{Baltag16} to eliminate the dynamic operators in a much stronger background logic. However, this seems to be a little bit \textit{ad hoc}, especially when we consider more general \textit{de re} updates which are not entirely public, such as telling $i$ the passwords of $c$ and $d$ but letting the observer $j$ be uncertain about which is which. \textbf{Is there a simpler/natural yet more powerful framework to handle all these dynamics in a uniform way?} Our answer is again affirmative, as to be explained below. 

\subsection{Bridging \textit{de re} and \textit{de dicto} by the assignment operator}
Our main inspiration comes from the treatment of \textit{de re} knowledge using the \textit{assignment operator} $[x:=t]$ from first-order \textit{dynamic logic} \cite{HKT200mit}. The intuitive semantics of $[x:=t]$ is simply an imperative one: assigning variable $x$ the current value of the term $t$. As remarked in \cite{kooi2007dynamic},  Pratt in \cite{Pratt76:PDL} already noticed the connection between the assignment operator and the \textit{$\lambda$-abstraction} that is often used to distinguish the \textit{de re} and \textit{de dicto} readings in first-order modal logic  \cite{STALNAKER1968,Fitting98}.\footnote{There is a large body of research in modal logic trying to  distinguish \textit{de re} and \textit{de dicto} readings, cf. e.g., also \cite{grove1993naming,corsi2013free,HollidayP2014,Rendsvig2010}.} Our technical framework follows the quantifier-free epistemic logic with assignments studied in \cite{Wangnames}, but without considering the termed modalities there. The core idea is that we can use the assignment operator to ``store'' the actual reference of a certain term, and use it under the right scope to capture various forms of \textit{de re} knowledge. For example, $[x:=a]\K_i Px \land \neg \K_iPa$ says that agent $i$ knows of $a$ that $P$, but does not know that $Pa$. As another simple example, note that $[x:=a]\K_i(x\approx a)$ actually expresses that $i$ knows what $a$ is, exactly as  $\Kv_ia$ in the knowing value logic that we mentioned \cite{Plaza89:lopc}. Essentially, the assignment operator can be used as the \textit{bridge} between the \textit{de dicto} and the \textit{de re} knowledge. Now comes the natural question: \textbf{can it also bridge the \textit{de dicto} and \textit{de re} updates? } 

The answer is positive and the solution is surprisingly simple: we just need to use the usual $\DEL$ dynamic operators such as public announcement or event updates together with the assignment operator. For example, the \textit{de dicto} update of announcing that $x\approx a$ can be turned into the \textit{de re} update of announcing the value of $a$ by adding the assignment operator $[x:=a]$ in front of the announcement operator $[!x\approx a]$. As we will show later, the combination of the announcement and the assignment is very powerful and can capture various notions of dependency and conditionals mixing \textit{de re} and \textit{de dicto} updates. The example of telling $i$ the passwords of $c$ and $d$ without letting $j$ know which is which can also be easily handled by using a two-world event model with $x\approx c\land y\approx d$ and $x\approx d\land y\approx c$ as the preconditions respectively, in the scope of two assignment operators $[x:=c][y:=d]$. This will become clear when we introduce the event update formally later on. 

Our treatment of the public announcements and event updates is basically the same as in the standard $\DEL$. Thus the basic properties such as \textit{perfect recall} and \textit{no miracles} between the knowledge operator and dynamic operators stay the same. The combination of the assignment and the dynamics together is responsible for the apparent failure of \textit{no miracles}.\footnote{The non-global nature of \textit{de re} updates comes from the assignments which only record the local value. This is also related to logics of local announcements \cite{Belardinelli_2017} and to the study of opaque updates whose result is not always antecedently known to the agent \cite{Cohen2020-COHOU-2}.} As in the standard \DEL, we will show that the dynamic operators can be eliminated.  

Our contributions in this paper are summarized below: 
\begin{itemize}
    \item We propose a lightweight dynamic epistemic framework with assignment operators, which can handle both \textit{de re} and \textit{de dicto} knowledge and updates in a uniform way.
    \item The public announcement operator and event model update operators can be eliminated qua expressivity as in the standard dynamic epistemic logic.
    \item We obtain complete axiomatizations of all the logics introduced in the paper. 
\end{itemize}
The technical results are relatively straightforward. The main point of the paper is to highlight the use of the assignment operators in capturing the \textit{de re} updates, and propose the alternative static  epistemic logic which can pre-encode \textit{de re} dynamics. The \textbf{magic of the assignment operator} is that it can automatically turn \textit{de dicto} notions into the corresponding \textit{de re} notions almost for free. Therefore we just need to add the assignment operator to a relatively standard \textit{de dicto} epistemic framework to capture all those \textit{de re} knowledge and updates, without introducing various new \textit{ad hoc} modalities.

We hope our framework can also bring new tools for philosophical analysis related to \textit{de re} updates. For example, \textit{de re} updates can be used to analyze scenarios in which an agent has \textit{de dicto} knowledge of every proposition but is still able to learn new \textit{de re} knowledge. Such learning events require \textit{de re} updates. Scenarios in which a propositionally omniscient agent is able to learn something new about their environment  play a central role in philosophy of mind (e.g., Frank Jackson's \textit{Mary's room} thought experiment  \cite{Lewis1979-LEWADD, Jackson1986-JACWMD}). We leave the philosophical discussion to a future occasion.

\paragraph{Structure of the paper} In Section \ref{sec.basics}, we introduce the basic epistemic logic with assignments and its axiomatization. Section \ref{sec.pa} adds the public announcement operator and Section \ref{sec.em} discusses the event model updates with and without factual changes. We conclude with future directions in Section \ref{sec.con}.

\section{Epistemic logic with assignments}\label{sec.basics}
In this section, we present a language of  epistemic logic with assignments. It can be viewed as a simplified version of the language studied in \cite{Wangnames} without the term-modalities. 
\subsection{Language and Semantics}
\begin{definition}[Language of $\ELAS$] Given a set of variables $\Var$, set of names $\Nm$, set of agents $\Ag$ and a set of predicate symbols $\Ps$, the language of Basic Epistemic Logic with Assignments ($\ELAS$) is defined as:
\begin{align*}
t&::= x\mid a\\
\varphi &:: = t \approx	t\mid P\vec{t} \mid  (\varphi \land \varphi) \mid \lnot \varphi \mid \K_i\varphi \mid [x:= t] \varphi 
\end{align*}
where $x\in\Var$, $a\in \Nm$, $P\in \Ps$, and $i\in \Ag$. 
\end{definition}

We call $t\approx t'$ and $P\vec{t}$ \textit{atomic} formulas.  We use the usual abbreviations $\lor, \to, \hK_i$, $\lr{x:=t}$, and write $\Kv_ia$ for $[x:=a]\K_i(x\approx a)$. As we discussed in the introduction, $\Kv_ia$ says the agent $i$ knows the value of $a$. Based on the semantics to be given later, $\Kv_i$ is indeed the same know-value modality discussed in \cite{Plaza89:lopc,WF13}. 

Following \cite{Wangnames}, we define the free and bound occurrences of a variable in a $\ELAS$-formula by viewing $[x:=t]$ in $[x:=t]\phi$ as a quantifier binding $x$ in $\phi$. We call $x$ a \textit{free variable} in $\phi$ if there is a \textit{free occurrence} of $x$ in $\phi$. Formally the set of free variables $\Fv(\phi)$ in $\phi$ is defined as follows: 
\begin{center}
	\begin{tabular}{ll}
		$\Fv(P\vec{t})=\Va(\vec{t})$& $~~\Fv(t\approx t')=\Va(t)\cup \Va(t')$\\
		$\Fv(\neg \phi)=\Fv(\phi)$ & $~~\Fv(\phi\land \psi)=\Fv(\phi)\cup \Fv(\psi)$ \\
		$\Fv(\K_i \phi)=\Fv(\phi)$ & $~~\Fv([x:=t]\phi)=(\Fv(\phi)\setminus \{x\})\cup \Va(t) $ \\
	\end{tabular}
\end{center}
where $\Va(\vec{t})$ is the set of variables in $\vec{t}$. We use $\phi[y\slash x]$ to denote the result of substituting $y$ for all the free occurrences of $x$ in $\phi$, and say $\phi[y\slash x]$ is \textit{admissible} if all the occurrences of $y$ by replacing free occurrences of $x$ in $\phi$ are also free in $\phi[y\slash x]$. It is showed in \cite{Wangnames} that $[x:=t]$ indeed behaves like a quantifier via a translation to a 2-sorted first-order logic. 

The models are simply first-order Kripke models. 
\begin{definition}[Models]
	A (constant domain) Kripke model $\M$ is a tuple  $\lr{W, D, R, \rho, \eta}$ where: 
	\begin{itemize}
		\item $W$ is a non-empty set of possible worlds.
		\item $D$ is a non-empty set of objects, called the \emph{domain} of $\M$. 
		\item $R: \Ag \to 2^{W\times W}$ assign a binary relation $R(i)$ (also written $R_i$) between worlds, to each agent $i$.
		\item $\rho:\Ps\times W\to \bigcup_{n\in \omega}2^{D^n}$ assigns an $n$-ary relation over $D$ each $n$-ary predicate $P$ at each world.
		\item $\eta:\Nm\times W\to D$ assigns an object to each name $a\in\Nm$ at each world $w$.
		\end{itemize}
		Given $\M$, we refer to its components as $W_\M, D_\M, R_\M, \rho_\M, \eta_\M$. A \emph{pointed Kripke model} is a triple $\M,w,\sigma$, where $w\in W_\M$ and  $\sigma:\Var\to D_\M$ assigns an object to every variable. Given $\M$, and a world $w$, $\sigma$ can be lifted to $\sigma_w$ over all the terms $t$ such that $\sigma_w(a)=\eta(a,w)$ for names. An \emph{epistemic} model is a model where the relations are equivalence relations.  
\end{definition}
Note that the names in $\Nm$ and predicates are non-rigid designators. 

\begin{definition}[Semantics]\label{2.6}
The truth conditions are given with respect to $\M,w,\sigma$: 
	$$\begin{array}{rcl}
	\hline
	\M, w, \sigma\vDash t\approx t' &\Leftrightarrow & \sigma_w(t)=\sigma_w(t') \\ 
	\M, w, \sigma\vDash P(t_1\cdots t_n) &\Leftrightarrow & (\sigma_w(t_1), \cdots, \sigma_w(t_n))\in \rho(P,w)  \\ 
	\M, w, \sigma\vDash \neg\phi &\Leftrightarrow&   \M, w, \sigma\nvDash \phi \\ 
	\M, w, \sigma\vDash (\phi\land \psi) &\Leftrightarrow&  \M, w, \sigma\vDash \phi \text{ and } \M, w, \sigma\vDash \psi \\ 
	\M, w, \sigma\vDash \K_i \phi &\Leftrightarrow& \M, v, \sigma \vDash\phi \text{ for all $v$ s.t.\ $wR_iv$}\\
	\M, w, \sigma\vDash [x:=t]\phi &\Leftrightarrow& \M, w, \sigma[x\mapsto \sigma_w(t)]\vDash \phi\\
	\hline 
	\end{array}$$
		where $\sigma[x\mapsto \sigma_w(t)]$ denotes an assignment that is the same as $\sigma$ except for mapping $x$ to $\sigma_w(t)$.
\end{definition}

Now we can check the derived semantics for $\Kv_i$:
\begin{align*}
    &\M, w, \sigma\vDash \Kv_i a\\
    &\Leftrightarrow \M, w, \sigma \vDash [x:=a]\K_i (x\approx a)\\
&\Leftrightarrow \M, v, \sigma[x\mapsto \sigma_w(a)] \vDash x\approx a \text{ for all $v$ s.t.\ $wR_iv$}\\
&\Leftrightarrow \sigma_w(a) = \sigma_v(a) \text{ for all $v$ s.t.\ $wR_iv$}
\end{align*}
Over reflexive models we have the semantics in \cite{WF13}: $$\M, w, \sigma\vDash \Kv_i a \Leftrightarrow \sigma_v(a) = \sigma_{v'}(a) \text{ for all $v, v'$ s.t.\ $wR_iv$ and $ wR_iv'$}.$$ 
\begin{example}\label{ex.nodicto}
Consider the following model $\M$ as a simple example with two worlds $s,t$, a signature that contains only the unary predicate $P$, one agent $i$, a domain with two objects $o_1$, $o_2$, a $\rho$ such that $\rho(P, s) = \{o_1\}, \rho(P, t) = \{o_2\}$, and an $\eta$ depicted below by abusing the symbol $\approx$:  
$$\xymatrix{
\underline{s:a\approx o_1, b\approx o_2  }\ar@(ur,ul)|{i}\ar@{<->}[rr]|{i}&{ }&{t: a\approx o_2, b\approx o_1} \ar@(ur,ul)|{i}
}
$$
\end{example}
In the above example, agent $i$ has the \textit{de dicto} knowledge that $P(a)$: $\M,s,\sigma \models \K_i P(a)$ for any $\sigma$, since the formula does not contain any free variable. However, note that $\M,s, \sigma \models \lnot [x:= a] \K_i P(x)$, i.e., in the actual world $s$ (underlined), agent $i$ does not have the \textit{de re} knowledge that the object that $a$ denotes (object $o_1$) has property $P$. Although the agent knows all the propositional facts regarding the property $P$, it still has the \textit{de re} ignorance. Further note that no closed formula involving just $P$ can distinguish states $s$ and $t$. A \textit{de re} update is needed for the agent to learn that state $t$ is not the actual world. 

Adapting the proofs in \cite{Wangnames,WangWeiSel}, it is not hard to show the following, which we leave for the full version of the paper: 
\begin{itemize}
    \item $[x:=t]$ cannot be eliminated  in $\ELAS$ qua expressivity. 
    \item $\ELAS$ is decidable over arbitrary and reflexive models;
    \item $\ELAS$ is undecidable over S5 models. 
\end{itemize}
The undecidability can be shown by coding Fitting's undecidable logic \textbf{S5}$\lambda_=$ where instead of the assignment operator, the $\lambda$-abstraction $\lr{\lambda x.\phi}$ is used to handle the distinction between \textit{de dicto} and \textit{de re}, e.g., $\lr{\lambda x.\Box \lr{\lambda y. y\approx x} (c)}(c)$ says $c$ is rigid, which is  equivalent to our $[x:=c]\K[y:=c]x\approx y.$ Note that our assignment operator is much easier to read compared to the $\lambda$-abstraction.

\subsection{Axiomatization}

Based on the axioms in \cite{Wangnames}, we proposed the following proof system $\SELAS$. 
\begin{center}
 \begin{tabular}{ll}
\hline
Axiom Schemas &\\
\hline
\TAUT & \text{ all the instances of tautologies}\\
\DISTK &$ \K_i(\phi\to\psi)\to (\K_i\phi\to \K_i\psi$)\\
	\AxId &$t\approx t$\\
\AxSym	& $t\approx t' \lra t'\approx t$\\
\AxTranseq	& $t\approx t'\land t'\approx t''\to t\approx t''$\\
	\AxSUBAS &$t\approx t' \to ([x:=t] \phi \lra [x:=t']\phi)$\\
	\AxSUBP &$\vec{t}\approx \vec{t'} \to (P\vec{t}\lra P\vec{t'})$ \\
			\AxRGDP   &$x\approx y \to \K_i x\approx y$\\
			\AxRGDN        &$x\not\approx y \to \K_i  x\not\approx y$\\
			\AxKAS& $[x:=t](\phi\to \psi)\to ( [x:=t]\phi\to[x:=t]\psi)$ \\
						\AxDETAS &$\lr{x:=t}\phi\to [x:=t]\phi$\\
			\AxEXEAS &$\lr{x:=t}\top$\\
			\AxEFAS  &$[x:=t]x\approx t$ \\
			\AxSUBtoAS &$\phi[y\slash x]\to [x:=y]\phi$ \quad ($\phi[y\slash x]$ is admissible)   \\

\hline
Rules&\\
\hline
\GENK &$\dfrac{\phi}{\K_i\phi}$\\
\texttt{MP}& $\dfrac{\phi,\phi\to\psi}{\psi}$\\
\NECAS&$\dfrac{\vdash\varphi\to \psi}{\vdash \varphi \to [x:=t] \psi} \quad \text{($x$ is not free in $\phi$)}$\\
\hline
\end{tabular}
\end{center}
\noindent where in $\AxSUBP$, $\vec{t}\approx \vec{t'}$ is the abbreviation of the conjunction of point-wise equivalences for sequences of terms $\vec{t}$ and $\vec{t'}$ such that $|\vec{t}|=|\vec{t'}|$.  The system $\SELASf$ is defined as $\SELAS$ together with the usual S5 schemata for $\K_i$: $\AxTrK: \K_i\phi\to \phi$, $\AxTrans: \K_i\phi\to\K_i\K_i\phi$, and $\AxEuc: \neg \K_i\phi\to \K_i\neg\K_i\phi$.

Note that $\AxId, \AxSym, \AxTranseq$ captures the nature of equality; $\AxSUBP$ and $\AxSUBAS$ regulate the substitution that we can safely do; $\AxRGDP$ and  $\AxRGDN$ describe that the variables are rigid;  $\AxKAS$, $\AxDETAS$, $\AxEXEAS$ capture that the assignment operator is a self-dual normal modality (the necessitation rule for $[x:=t]$ is a special case of $\NECAS$); $\AxEFAS$ characterizes the effect of the assignment operator; $\AxSUBtoAS$ and $\NECAS$ are the  counterparts of the usual axiom and rules of the first-order quantifier.
\begin{remark}
Comparing to \cite{Wangnames}, we do not have the special axioms to handle the term-modalities, and the S5 axioms for the epistemic operators are standard. 
\end{remark}
Given that $\K_i$ and $[x:=t]$ are normal modalities, we can show that the rule of replacement of equals is an admissible rule in the systems $\SELAS$ and $\SELASf$.
\begin{thm}[Soundness]\label{thm.sound}
System $\SELAS$ is sound over Kripke models and $\SELASf$ is sound over epistemic models. 
\end{thm}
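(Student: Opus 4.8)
The plan is to verify, schema by schema and rule by rule, that every axiom of $\SELAS$ is valid over all (constant-domain) Kripke models and that every inference rule preserves validity; the additional S5 schemata of $\SELASf$ are then handled by the standard observation that $\AxTrK$, $\AxTrans$, $\AxEuc$ are valid precisely when each $R_i$ is an equivalence relation, which holds in epistemic models by definition. Since validity here means truth at every pointed model $\M,w,\sigma$, I would fix an arbitrary such triple throughout and unwind Definition~\ref{2.6}. The propositional core ($\TAUT$, $\DISTK$, $\GENK$, $\MP$) is entirely routine and identical to basic modal logic, so I would dispatch it in one line.

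The genuinely new content is in the equality axioms, the rigidity axioms, and the assignment axioms, so that is where I would spend the verification effort. First, $\AxId$, $\AxSym$, $\AxTranseq$ follow immediately because $\sigma_w$ maps terms into $D_\M$ and $=$ on $D_\M$ is an equivalence relation. $\AxSUBP$ and $\AxSUBAS$ reduce to the fact that if $\sigma_w(\vec t)=\sigma_w(\vec t\,')$ then the two formulas are evaluated against literally the same objects (for $\AxSUBAS$, the two updated assignments $\sigma[x\mapsto\sigma_w(t)]$ and $\sigma[x\mapsto\sigma_w(t')]$ coincide). For $\AxRGDP$ and $\AxRGDN$: since $\sigma$ does not depend on the world, $\sigma_v(x)=\sigma(x)$ for every $v$, so $\sigma_w(x)=\sigma_w(y)$ iff $\sigma_v(x)=\sigma_v(y)$ for all $v$ with $wR_iv$ — this is exactly the rigidity of variables, and it is the one place where the world-independence of $\sigma$ (as opposed to $\eta$) is used. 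For the assignment block: $\AxKAS$, $\AxDETAS$, $\AxEXEAS$ hold because $[x:=t]$ has a single, total, deterministic ``successor'' assignment $\sigma[x\mapsto\sigma_w(t)]$, so it behaves as a self-dual normal box (normality and $\lr{x:=t}\top$ are immediate, and self-duality gives $\AxDETAS$). $\AxEFAS$ is direct from the truth clause: after updating, $x$ and $t$ are evaluated to the same object at $w$. The most delicate axiom is $\AxSUBtoAS$, $\varphi[y/x]\to[x:=y]\varphi$ with $\varphi[y/x]$ admissible: here I would prove the auxiliary substitution lemma that if $\varphi[y/x]$ is admissible then $\M,w,\sigma\vDash\varphi[y/x]$ iff $\M,w,\sigma[x\mapsto\sigma(y)]\vDash\varphi$, by induction on $\varphi$ (the admissibility side condition is exactly what makes the $[z:=t]\psi$ case go through, preventing $y$ from being captured). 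The right-hand side of this equivalence is by definition the truth condition of $[x:=y]\varphi$, so the axiom — in fact the stronger biconditional — follows. Finally, for $\NECAS$: assuming $\vDash\varphi\to\psi$ and that $x\notin\Fv(\varphi)$, suppose $\M,w,\sigma\vDash\varphi$; since $x$ is not free in $\varphi$, a routine locality lemma ($\Fv(\varphi)$ determines truth) gives $\M,w,\sigma[x\mapsto\sigma_w(t)]\vDash\varphi$, hence by the premise $\M,w,\sigma[x\mapsto\sigma_w(t)]\vDash\psi$, i.e.\ $\M,w,\sigma\vDash[x:=t]\psi$.

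The main obstacle is really just the bookkeeping around the substitution lemma for $\AxSUBtoAS$: one must state admissibility correctly and check that the inductive step for nested assignment operators $[z:=t]\psi$ does not suffer variable capture, which is the classical subtlety with quantifier-instantiation axioms. Everything else is a mechanical unwinding of the semantics. I would remark that the needed locality lemma ($\M,w,\sigma\vDash\varphi$ depends only on $\sigma{\restriction}\Fv(\varphi)$) and the substitution lemma are already established in \cite{Wangnames} for the richer language, so in the paper itself these can simply be cited rather than reproved.
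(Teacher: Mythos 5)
Your proposal is correct, and it is exactly the routine schema-by-schema verification that the paper leaves implicit: the authors state Theorem~\ref{thm.sound} without any proof, treating soundness as a standard unwinding of the semantics. You have also correctly isolated the only two non-mechanical ingredients — the substitution lemma (with its admissibility side condition) needed for $\AxSUBtoAS$, and the locality lemma ($\Fv$-dependence of truth) needed for $\NECAS$ — both of which are indeed available from \cite{Wangnames}, so nothing is missing.
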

The following are handy for the completeness proof. 
\begin{prop}
	The following are derivable from the system:
	$$
	\begin{array}{ll}
		\mathtt { DBASEQ } & \langle x:=t\rangle \varphi \leftrightarrow[x:=t] \varphi \\
		\mathtt{ CNECAS } & \displaystyle\frac{\vdash \varphi \rightarrow \psi}{\vdash[x:=t] \varphi \rightarrow \psi} \quad(x \notin F v(\psi)) \\ 
		\mathtt { EAS } & {[x:=t] \varphi \leftrightarrow \varphi(x \notin \Fv(\varphi))} \\
		\mathtt { SUBASEQ } & \varphi[y / x] \leftrightarrow[x:=y]\phi \quad \text{ given } \varphi[y / x] \text { is admissible } \\
		\mathtt { NECAS }^{\prime} & \displaystyle\frac{\vdash \varphi}{[x:=t] \varphi}
	\end{array}
	$$
\end{prop}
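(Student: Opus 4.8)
The plan is to derive the five items in an order that respects their dependencies: the delicate point is only that $\AxDBASEQ$, $\mathtt{EAS}$ and $\mathtt{CNECAS}$ all implicitly use a necessitation rule for $[x:=t]$, which is not among the primitive rules. So first I would establish $\mathtt{NECAS}'$: given $\vdash\varphi$, we get $\vdash\top\to\varphi$ by $\TAUT$ and $\MP$, and since $x$ is not free in $\top$, $\NECAS$ yields $\vdash\top\to[x:=t]\varphi$, hence $\vdash[x:=t]\varphi$ by $\MP$. Together with $\AxKAS$ this makes $[x:=t]$ a normal modality, so every $\mathbf K$-theorem for it is available from here on.

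Next I would derive $\AxDBASEQ$. One direction, $\lr{x:=t}\varphi\to[x:=t]\varphi$, is exactly $\AxDETAS$. For the converse, use the $\mathbf K$-theorem $[x:=t]\varphi\land\lnot[x:=t]\bot\to\lr{x:=t}\varphi$ (equivalently $[x:=t]\varphi\land\lr{x:=t}\top\to\lr{x:=t}\varphi$, a standard $\mathbf K$-fact) together with $\AxEXEAS$, which is $\lr{x:=t}\top$ and hence gives $\lnot[x:=t]\bot$; $\MP$ then yields $\vdash[x:=t]\varphi\to\lr{x:=t}\varphi$. This is just the usual observation that a serial, functional modality is self-dual.

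Then $\mathtt{EAS}$, $\mathtt{CNECAS}$ and $\mathtt{SUBASEQ}$ follow quickly. For $\mathtt{EAS}$ with $x\notin\Fv(\varphi)$: applying $\NECAS$ to the tautology $\vdash\varphi\to\varphi$, whose side condition is exactly $x\notin\Fv(\varphi)$, gives $\vdash\varphi\to[x:=t]\varphi$; applying this to $\lnot\varphi$ (which also lacks $x$ free) and contraposing gives $\vdash\lr{x:=t}\varphi\to\varphi$, which with $\AxDBASEQ$ yields $\vdash[x:=t]\varphi\to\varphi$. For $\mathtt{CNECAS}$: from $\vdash\varphi\to\psi$, $\mathtt{NECAS}'$ and $\AxKAS$ give $\vdash[x:=t]\varphi\to[x:=t]\psi$, and $\mathtt{EAS}$ (using $x\notin\Fv(\psi)$) gives $\vdash[x:=t]\psi\to\psi$; compose. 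For $\mathtt{SUBASEQ}$ with $\varphi[y/x]$ admissible: the direction $\varphi[y/x]\to[x:=y]\varphi$ is $\AxSUBtoAS$; for the converse, note $(\lnot\varphi)[y/x]=\lnot(\varphi[y/x])$ is also admissible since negation binds nothing, so $\AxSUBtoAS$ gives $\vdash\lnot(\varphi[y/x])\to[x:=y]\lnot\varphi$, and contraposing together with $\AxDBASEQ$ yields $\vdash[x:=y]\varphi\to\varphi[y/x]$.

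I do not expect a genuine obstacle here; the proposition is really modal bookkeeping. The only things to be careful about are (i) deriving $\mathtt{NECAS}'$ and $\AxDBASEQ$ strictly before $\mathtt{EAS}$ and $\mathtt{CNECAS}$ so that no step is circular, and (ii) checking that the admissibility hypothesis of $\AxSUBtoAS$ is preserved when a negation is prefixed, which it is because the free/bound status of variable occurrences is unchanged by $\lnot$.
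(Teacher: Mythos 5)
Your proposal is correct and follows essentially the same route as the paper's own (very terse) proof: \texttt{DBASEQ} from \texttt{DETAS} plus \texttt{DAS}, \texttt{SUBASEQ} from the contrapositive of \texttt{SUB2AS} plus \texttt{DBASEQ}, \texttt{NECAS}$'$ as an instance of \texttt{NECAS}, and \texttt{EAS}/\texttt{CNECAS} from \texttt{NECAS} via contraposition. The only cosmetic difference is that the paper derives \texttt{CNECAS} first and obtains \texttt{EAS} from it (taking $\psi=\varphi$), whereas you derive \texttt{EAS} first and use it for \texttt{CNECAS}; both orderings are non-circular.
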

\begin{proof} $\mathtt{ DBASEQ}$ is based on $\mathtt{DETAS}$ and  $\mathtt{DAS}$. $\mathtt{CNECAS}$ is due to $\mathtt{NECAS}$ and $\mathtt{DBASEQ}$ for contrapositive. $\mathtt{EAS} $ is based on $\mathtt{NECAS}$ and $\mathtt{CNECAS}$ (taking $\psi=\varphi$). $\mathtt{SUBASEQ}$ is due to the contrapositive of $\mathtt{SUB2AS}$ and $\mathtt{DBASEQ}$. $\mathtt{NECAS}'$ is special case for $\mathtt{NECAS}$.
\end{proof}
We can also rename the bound variables as shown in \cite{Wangnames}. 
\begin{prop}[Relettering]
Let $z$ be fresh in $\varphi$ and $t$, then 
	$\vDash[x:=t] \varphi \leftrightarrow[z:=t] \varphi[z / x].$
\end{prop}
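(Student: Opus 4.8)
The plan is to argue semantically, matching the $\vDash$ in the statement. Fix an arbitrary pointed model $\M,w,\sigma$ and put $d=\sigma_w(t)$. Unfolding the truth clause for $[x:=t]$ on the left and for $[z:=t]$ on the right turns the goal into
\[
\M,w,\sigma[x\mapsto d]\vDash\varphi\quad\Longleftrightarrow\quad\M,w,\sigma[z\mapsto d]\vDash\varphi[z/x].
\]
Hence everything reduces to a \emph{substitution lemma}: if $z$ does not occur in $\varphi$, then for every assignment $\tau$ and every $d'\in D_\M$,
\[
\M,w,\tau[x\mapsto d']\vDash\varphi\quad\Longleftrightarrow\quad\M,w,\tau[z\mapsto d']\vDash\varphi[z/x].
\]
Freshness of $z$ in $\varphi$ already guarantees that $\varphi[z/x]$ is admissible, so no variable capture occurs; the further hypothesis that $z$ is fresh in $t$ is not actually needed for this semantic argument.

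I would prove the substitution lemma by induction on $\varphi$, having first recorded the routine \emph{coincidence lemma}: if two assignments agree on $\Fv(\psi)$, then they agree on the truth value of $\psi$ at every $\M,w$. In the substitution lemma the atomic cases $t_1\approx t_2$ and $P\vec{t}$ hold because renaming $x$ to $z$ in the terms is exactly compensated by the swap between $\tau[x\mapsto d']$ and $\tau[z\mapsto d']$ when computing term values; the Boolean and $\K_i$ cases follow immediately from the induction hypothesis. The only delicate case is $\varphi=[y:=s]\psi$, in which $y\neq z$ because $z$ is fresh. If $y=x$, the free occurrences of $x$ in $\varphi$ all lie inside $s$, so $\varphi[z/x]=[x:=s[z/x]]\psi$ with $\psi$ untouched; unfolding both sides and using $(\tau[z\mapsto d'])_w(s[z/x])=(\tau[x\mapsto d'])_w(s)$, the claim reduces to the coincidence lemma applied to $\psi$ (which does not contain $z$). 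If $y\neq x$, then $\varphi[z/x]=[y:=s[z/x]]\psi[z/x]$; unfolding both sides, using the same term-value identity for $s$, and applying the induction hypothesis to $\psi$ with the base assignment updated at $y$ (legitimate because $y\neq z$) closes the case.

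Chaining the two displayed equivalences (with $\tau=\sigma$ and $d'=d$) yields $\M,w,\sigma\vDash[x:=t]\varphi$ iff $\M,w,\sigma\vDash[z:=t]\varphi[z/x]$; since $\M,w,\sigma$ was arbitrary, this is exactly $\vDash[x:=t]\varphi\leftrightarrow[z:=t]\varphi[z/x]$. The main obstacle I anticipate is the bookkeeping in the $[y:=s]\psi$ case: carefully separating free from bound occurrences of $x$ (only the former are renamed), tracking that the renamed variable may occur both in the assignment term $s$ and, when $y\neq x$, inside $\psi$, and invoking freshness of $z$ exactly where capture would otherwise arise. A purely syntactic derivation in the spirit of $\alpha$-conversion --- using $\mathtt{SUBASEQ}$ and $\mathtt{SUBAS}$, as in \cite{Wangnames} --- is also available and is likely the form most convenient inside the later completeness proof; it trades the semantic substitution lemma for an equally routine syntactic lemma about iterated substitutions.
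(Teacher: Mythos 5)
Your proof is correct. Note that the paper itself does not prove this proposition: it is stated without proof, with only a pointer to \cite{Wangnames}, so there is no in-paper argument to compare against. Your semantic route --- unfolding the two assignment clauses and reducing everything to a substitution lemma proved by induction on $\varphi$, backed by a coincidence lemma --- is the standard way to establish it, and your handling of the one delicate case $[y:=s]\psi$ (splitting on $y=x$ versus $y\neq x$, using the term-value identity $(\tau[z\mapsto d'])_w(s[z/x])=(\tau[x\mapsto d'])_w(s)$, and invoking freshness of $z$ to discharge the residual discrepancy at $z$ via coincidence) is exactly where the content lies. Two small remarks. First, your observation that freshness of $z$ in $t$ is dispensable for the semantic claim is right: the outer assignment $\sigma$ is the same on both sides of the biconditional, so $\sigma_w(t)$ is computed identically even when $t=z$. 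Second, state the substitution lemma with the world universally quantified (for all $w$, $\tau$, $d'$), rather than for the single $w$ fixed in the outer argument; otherwise the induction hypothesis is too weak for the $\K_i$ case, which needs the claim at the accessible worlds $v\neq w$.
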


The completeness of $\SELAS$ and $\SELASf$ can be proved by an adaptation of the corresponding (highly non-trivial) completeness proof in \cite{Wangnames} without the treatment for the term-modalities in \cite{Wangnames}. We omit the proof and leave it to the full version due to page limitation.  
\begin{theorem}\label{thm.compelas}
$\SELAS$ is strongly complete over arbitrary models and $\SELASf$ is strongly complete over epistemic models. 
\end{theorem}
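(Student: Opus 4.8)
Theorem \ref{thm.compelas} asserts strong completeness of $\SELAS$ over arbitrary models and of $\SELASf$ over epistemic (S5) models.

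The plan is to build a canonical model out of maximal consistent sets of formulas, but with the usual wrinkle for quantifier-like operators: a single maximal consistent set (MCS) does not by itself carry enough "witness" information for the assignment operator, so I would work with a Henkin-style construction. Concretely, I would first extend the language with a countably infinite supply of fresh variables (or fresh names, depending on how \cite{Wangnames} sets it up) to serve as witnesses, prove a Lindenbaum lemma producing MCSs that are "rich" / witnessed in the sense that whenever $\langle x:=t\rangle\varphi$ is in the set, some witness term $y$ with $\varphi[y/x]$ is also in the set — here the axioms \AxSUBtoAS\ and the rules \NECAS, together with \textsf{SUBASEQ} and \textsf{Relettering} from the Propositions above, are exactly what licences the standard witness-adding argument. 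Then I would define the canonical model $\M^c$ whose worlds are these rich MCSs, with domain built from equivalence classes of terms under the provable-equality relation $t\sim t'$ iff $t\approx t'$ is in the set (well-definedness across worlds needs \AxRGDP/\AxRGDN\ for variables and care about the non-rigidity of names), $R_i$ defined by the usual "$\K_i$-unpacking" $\Gamma R_i \Delta$ iff $\{\varphi : \K_i\varphi\in\Gamma\}\subseteq\Delta$, and $\rho,\eta$ read off from the atomic formulas in each world. The assignment $\sigma$ at a world is the identity-on-variables map into the term classes.

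The heart of the argument is the Truth Lemma: $\M^c, \Gamma, \sigma \vDash \varphi$ iff $\varphi\in\Gamma$, proved by induction on $\varphi$. The atomic, Boolean, and $\K_i$ cases are routine (for $\K_i$, the existence lemma uses \DISTK, \GENK\ in the standard way). The genuinely new case is $\varphi = [x:=t]\psi$: here I would use $\mathtt{DBASEQ}$ to move freely between $[x:=t]$ and $\langle x:=t\rangle$, use the semantic clause $\M^c,\Gamma,\sigma\vDash[x:=t]\psi$ iff $\M^c,\Gamma,\sigma[x\mapsto\sigma_\Gamma(t)]\vDash\psi$, and then relate the shifted assignment to a syntactic substitution via \textsf{SUBASEQ} ($\varphi[y/x]\leftrightarrow[x:=y]\varphi$ when admissible) after relettering to a fresh witness variable that names the class $\sigma_\Gamma(t)$. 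The richness of the MCS supplies that witness on the "$\Rightarrow$" side; \AxSUBtoAS\ and \AxEFAS\ handle the "$\Leftarrow$" side. For $\SELASf$, one additionally checks that the axioms \AxTrK, \AxTrans, \AxEuc\ force each $R_i$ to be an equivalence relation on the canonical model, which is the textbook argument.

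I expect the main obstacle to be exactly the bookkeeping around the canonical domain and the interaction of substitution with admissibility and with the non-rigidity of names: one must make sure the term-class construction is coherent world-by-world (names can change denotation across $R_i$-edges, variables cannot), that the witness variables chosen in the Lindenbaum step stay fresh throughout, and that every appeal to $\varphi[y/x]$ in the Truth Lemma is to an admissible substitution (using Relettering to rename bound variables out of the way when it is not). This is precisely the "highly non-trivial" part inherited from \cite{Wangnames}; since the present system is that system minus the term-modalities and plus the standard S5 package for $\K_i$, the strategy is to quote their construction verbatim, delete the clauses and cases dealing with termed modalities, and add the routine S5 canonicity check — so in the full version the proof is an adaptation rather than a fresh development. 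Strong (rather than weak) completeness follows because the Lindenbaum construction extends an arbitrary consistent set, not just a single formula.
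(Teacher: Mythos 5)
Your overall strategy is the one the paper itself uses (in the full version): extend the language with countably many fresh variables, build Henkin-witnessed maximal consistent sets over growing sublanguages, take variable-equivalence-classes as the domain, and push the assignment case of the Truth Lemma through \AxSUBtoAS, $\mathtt{SUBASEQ}$ and relettering. Two points in your sketch, however, would not survive contact with the details. First, the canonical domain cannot be built from equivalence classes of arbitrary \emph{terms}: names are non-rigid, so the provable-equality class of a name is not a world-independent object ($a\approx b$ may lie in one MCS of the generated submodel and not in another). The paper's construction therefore takes the domain to consist of classes of \emph{variables} only (these are rigid by \AxRGDP\ and \AxRGDN) and interprets each name at each world via its witness variable, i.e.\ $\eta^c(a,\Delta)=|x|$ for the $x$ with $x\approx a\in\Delta$; the witnessing property is correspondingly stated for names ($\exists$-property), not for diamond formulas. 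You flag "care about the non-rigidity of names" but do not resolve it, and resolving it is exactly this restriction.

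Second, and more substantively, the S5 half is not "the textbook canonicity argument." Because the existence lemma must introduce fresh witness variables at each successor (the value of a name at a successor need not be named by any variable of the predecessor's language), the canonical relation is defined between MCSs over \emph{different} languages, with $L_\Delta\subseteq L_\Theta$ and a side condition forcing the new variables of $\Theta$ to be provably distinct from the old ones. This relation is provably reflexive and transitive (using \AxTrK\ and \AxTrans), but it cannot be symmetric, since symmetry would force $L_\Theta\subseteq L_\Delta$. The paper instead passes to the reflexive-symmetric-transitive closure $R_i^*$ of the canonical relation and proves a separate preservation lemma (using \AxTrans\ and \AxEuc\ to push $\K_i\psi$ along zig-zag paths) showing that the closure does not change the truth of any formula. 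Without this extra step your argument for $\SELASf$ has a genuine gap.
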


\section{Adding public announcement}\label{sec.pa}
In this section, we develop a public announcement logic based on the language $\ELAS$. We will show that as in the case of standard $\PAL$, the announcement operator can be eliminated. 
\subsection{Language and semantics}
\begin{definition}[Language of $\PALAS$]The language of Public Announcement Logic with Assignments ($\PALAS$) is defined by adding the announcement operator to $\ELAS$: 
\begin{align*}
t&::= x\mid a\\
\varphi &:: = t \approx	t\mid P\vec{t} \mid  (\varphi \land \varphi) \mid \lnot \varphi \mid \K_i\varphi \mid [x:= t] \varphi  \mid [!\phi]\phi
\end{align*}
where $x\in\Var$, $a\in \Nm$, $P\in \Ps$ and $i\in \Ag$. 
\end{definition}
As in the standard $\PAL$, $[\psi]\phi$ intuitively says that if $\psi$ can be truthfully announced then afterwards $\phi$ holds. Besides the usual abbreviations, we also write $\lr{!\phi}$ for $\neg[!\phi]\neg$. 

With this simple addition, we can capture the \textit{de re} updates of publicly announcing the actual value of $a$ by $[x:=a][!x\approx a]$. In the following, we will also write $[!a]\phi$ for $[x:=a][!x\approx a]\phi$ when $x$ is not free in $\phi$. This is essentially the $[a]$ operator introduced in \cite{GvEW16}.

We can actually do \textit{much more} beyond announcing the value of $a$. For example, the \textit{de re} announcement that the reference of $a$ does have the property $P$ is expressed by $[x:=a][!Px]$. This will give the \textit{de re} knowledge that object $o_1$ has property $P$ to the agent in Example \ref{ex.nodicto}.


The semantics of the announcement operator is essentially the same as in standard $\PAL$. 
\begin{definition}[Semantics]
	$$\begin{array}{rcl}
	\hline
	\M, w, \sigma\vDash [!\psi]\phi &\Leftrightarrow& \M, w, \sigma\vDash \psi \text{ implies }\M|^\sigma_{\psi}, w, \sigma \vDash \phi\\
	\hline 
	\end{array}$$
 where $\M|^\sigma_{\psi}$ is the submodel of $\M$ restricted to the $\psi$ worlds in $\M$, i.e., 
	$\M|_{\psi}^{\sigma}=\{W', D_\M, R', \eta'\}$ such that $W'=\lr{v\mid \M,v,\sigma\vDash\psi}$, $R'_i=R_i|_{W'\times W'}$ and $\eta'=\eta|_{W'}$.
\end{definition}
Now we can check the induced semantics of $[!a]$:
$$\M, w, \sigma\vDash [!a]\phi \Leftrightarrow \M, w, \sigma \vDash [x:=a][!x\approx a]\phi \Leftrightarrow \M, w, \sigma[x\mapsto \sigma_w(a)] \vDash [!x\approx a]\phi \Leftrightarrow \M|^{\sigma[x\mapsto \sigma_w(a)]}_{x\approx a}\vDash \phi$$
where $\M|^{\sigma[x\mapsto \sigma_w(a)]}_{x\approx a}$ is the submodel of $\M$ with all the worlds that share the same value of $a$ as the actual world. This is indeed the semantics given to the \textit{public inspection operator} in \cite{GvEW16}. 
	
With the announcement operator, we can also define the conditional operators  introduced in \cite{WF14,Baltag16} over epistemic models. For example:
\begin{itemize}
 \item $\Kv_i(\phi,c):= \K_i[!\phi]\Kv_ic:$ Agent $i$ would know the value of $c$ given $\phi$.
    \item $\Kv_i(c, d):= \K_i[!c]\Kv_id$: Agent $i$ would know the value of $d$ given the value of $c$, namely, agent $i$ knows how the value of $d$ functionally depends on the value of $c$;
    \item $\Kv_i(c, \phi):= \K_i[!c](\K_i\phi\lor \K_i\neg\phi)$: Agent $i$ would know the truth value of $\phi$ given the value of $c$, i.e., agent $i$ knows how the truth value of $\phi$ depends on the value of $c$;
    \item  $\Kv_i(\psi, \phi):= \K_i([!\psi](\K_i\phi\lor \K_i\neg\phi)\land [!\neg \psi](\K_i\phi\lor \K_i\neg\phi))$: Agent $i$ knows how the truth value of $\phi$ depends on the truth value of $\psi$. 
\end{itemize}

Based on the semantics, it is not hard to show the axioms of perfect recall and no miracles are still valid, which form the foundation for the reduction of the announcement operator to be introduced (cf. \cite{WC12}).
\begin{proposition}
The following are valid: 
\begin{center}
 \begin{tabular}{ll}
\hline
\PR & $\K_i[!\psi]\phi\to [!\psi]\K_i\phi$\\
\NM& $\lr{!\psi}\K_i\phi\to\K_i[!\psi]\phi$\\
\hline
\end{tabular}
\end{center}
\end{proposition}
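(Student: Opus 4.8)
The plan is to verify both validities by a direct semantic argument mirroring the one for standard $\PAL$; the observation that makes it go through unchanged is that the announcement operator does not touch the assignment $\sigma$, so the submodel $\M|^\sigma_\psi$ obtained after announcing $\psi$ depends only on $\M$, $\psi$ and $\sigma$, and in particular is the \emph{same} submodel no matter which world we are sitting at. This is exactly what fails for the \textit{de re} announcement $[!a]\phi=[x:=a][!x\approx a]\phi$: there the surviving worlds are those agreeing with the \emph{local} value $\sigma_v(a)$, which varies with $v$, so $\PR$ and $\NM$ break for $[!a]$ even though they hold for the \textit{de dicto} $[!\psi]$. Throughout, recall that in $\M|^\sigma_\psi$ one has $w R'_i v$ iff $w R_i v$ in $\M$ and $\M,w,\sigma\vDash\psi$ and $\M,v,\sigma\vDash\psi$.

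For $\PR$, I would suppose $\M,w,\sigma\vDash\K_i[!\psi]\phi$ together with $\M,w,\sigma\vDash\psi$, and show $\M|^\sigma_\psi,w,\sigma\vDash\K_i\phi$. Let $v$ be any world with $w R'_i v$; then $w R_i v$ in $\M$ and $\M,v,\sigma\vDash\psi$. From the hypothesis $\M,v,\sigma\vDash[!\psi]\phi$, so using $\M,v,\sigma\vDash\psi$ we get $\M|^\sigma_\psi,v,\sigma\vDash\phi$. Since $v$ was arbitrary, $\M|^\sigma_\psi,w,\sigma\vDash\K_i\phi$, i.e.\ $\M,w,\sigma\vDash[!\psi]\K_i\phi$.

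For $\NM$, I would suppose $\M,w,\sigma\vDash\lr{!\psi}\K_i\phi$, which unpacks to $\M,w,\sigma\vDash\psi$ and $\M|^\sigma_\psi,w,\sigma\vDash\K_i\phi$, and show $\M,w,\sigma\vDash\K_i[!\psi]\phi$. Let $v$ be any world with $w R_i v$ in $\M$; to establish $\M,v,\sigma\vDash[!\psi]\phi$ I may assume $\M,v,\sigma\vDash\psi$ (else it holds vacuously). Then both $w$ and $v$ survive the announcement, so $w R'_i v$, and from $\M|^\sigma_\psi,w,\sigma\vDash\K_i\phi$ we obtain $\M|^\sigma_\psi,v,\sigma\vDash\phi$, hence $\M,v,\sigma\vDash[!\psi]\phi$. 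As $v$ was arbitrary, $\M,w,\sigma\vDash\K_i[!\psi]\phi$.

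I expect no real obstacle here; the only care needed is the bookkeeping around the relation restriction $R'_i=R_i|_{W'\times W'}$ and the point that the reference world $w$ itself must lie in $W'$ for $w R'_i v$ to make sense, which is guaranteed in both directions by the standing assumption $\M,w,\sigma\vDash\psi$ (supplied by the truth condition for $[!\psi]$ in the $\PR$ case and by $\lr{!\psi}$ in the $\NM$ case). Since no closure properties of the epistemic relations are invoked, the argument is uniform over arbitrary models and over S5 models.
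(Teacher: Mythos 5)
Your proof is correct and is exactly the standard $\PAL$ argument the paper implicitly appeals to (it gives no explicit proof, remarking only that validity follows from the semantics); the key observation you isolate --- that $\M|^\sigma_\psi$ depends only on $\M$, $\psi$ and $\sigma$ and not on the evaluation world, unlike the \textit{de re} case $[!a]$ --- is precisely why the argument carries over unchanged.
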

	
\subsection{Axiomatization}
We define the proof system $\SPALAS$ (\SPALASf) as the proof system obtained by extending $\SELAS$ ($\SELASf$) with the following reduction axioms, which help us to eliminate the announcement operator in $\PALAS$. $\AK$ is essentially the combination of the axioms $\PR$ and $\NM$ (cf. \cite{WC12}). Besides the usual reduction axioms for \PAL, we have a new axiom $\ASS$.  
\begin{center}
 \begin{tabular}{ll}
\hline
Axiom Schemas &\\
\hline

\ATOM & $[!\psi]p\lra(\psi\to p)$  \text{ (if $p$ is atomic)}\\
\NEG & $[!\psi]\neg \phi\lra(\psi\to \neg [!\psi]\phi)$\\
\CON & $[!\psi](\phi\land\chi)\lra([!\psi]\phi\land [!\psi]\chi)$\\
\AK & $[!\psi]\K_i\phi\lra (\psi\to \K_i [!\psi]\phi)$\\
\CCOM & $[!\psi][!\chi]\phi\lra [!(\psi\land [!\psi]\chi)]\phi$\\
\ASS&$[!\psi][x:=t] \phi \lra [z:=x][x:=t] [!\psi[z/x]] \phi$ \\&($z$ does not occur in $[!\psi][x:=t] \phi$)\\
\hline
\end{tabular}
\end{center}

\begin{theorem}
$\SPALAS$ is sound over arbitrary models. 
\end{theorem}
\begin{proof}
The validity of the first five reduction axioms is as in the standard $\PAL$. We only focus on the last one, $\ASS$, which is about switching the assignment operator and the announcement operator. Note that in $\ASS$, $z$ is fresh in $\varphi,\psi$ and $z\not= t$, therefore $\psi[z/x]$ is always admissible. We first prove the following claim: 
\begin{claim} For any $v$ in $\M$:
$$\M,v,\sigma\vDash\psi \iff \M,v,\sigma \vDash [z:=x][x:=t]\psi[z/x]$$
\end{claim}
\begin{claimproof}
Since $z$ is fresh, and there is no free $x$ in $\psi[z/x]$, we have for any $v,u$ in $\M$:
\begin{align*}
    &\M,v,\sigma\vDash\psi\\
\iff&\M,v,\sigma[z\mapsto\sigma(x)]\vDash\psi[z/x]\\
\iff& \M,v,\sigma[z\mapsto\sigma(x)][x\mapsto\sigma_{u}(t)]\vDash \psi[z/x] \qquad (\star).
\end{align*}
Let $\sigma^*=\sigma[z\mapsto\sigma(x)]$. Since $t\not=z$, and since changing $\sigma$ does not affect $\eta$ on $u$, we have for any $u$ in $\M$ $$\sigma_u(t)= \sigma^*_u(t) \quad (\dagger)$$ no matter whether $t$ is a variable or a name.  Therefore we have for any $u$ in $\M$:
$$\sigma[z\mapsto\sigma(x)][x\mapsto\sigma_u(t)]=\sigma[z\mapsto\sigma(x)][x\mapsto\sigma^*_u(t)]$$

Now from ($\star$) we have : 
$$\M,v,\sigma\vDash\psi\iff  \M,v,\sigma[z\mapsto\sigma(x)][x\mapsto\sigma^*_u(t)]\vDash \psi[z/x] \quad (\ddagger) $$

In particular, taking $u=v$ in ($\ddagger$) gives us the proof for the claim according to the semantics.
\end{claimproof}

Now consider the following two cases:\\

\indent \textbf{(Case I)} If $\M,w,\sigma\not\vDash\psi$, then 
$\M,w,\sigma\vDash[!\psi][x:=t] \phi$ is trivially true. By the above claim, $\M,w,\sigma \nvDash [z:=x][x:=t]\psi[z/x]$. Thus $\M,w,\sigma\vDash [z:=x][x:=t] [!\psi[z/x]] \phi.$\\

\textbf{(Case II)} If $\M,w,\sigma\vDash\psi$, by the above claim, $\M,w,\sigma \vDash [z:=x][x:=t]\psi[z/x]$. According to the semantics we need to show (1) iff (2) below : 
$$(1)\ \M|^\sigma_\psi,w,\sigma[x\mapsto \sigma_w(t)]\vDash\varphi$$ 
$$(2)\ \M|^{\sigma[z\mapsto \sigma_{w}(x)][x\mapsto\sigma^*_w(t)]}_{\psi[z/x]},w,\sigma[z\mapsto \sigma_{w}(x)][x\mapsto\sigma^*_w(t)]\vDash\varphi$$


Note that $\sigma(x)=\sigma_w(x)$ by definition. Now taking $u=w$ in ($\ddagger$) immediately shows that $\M|^{\sigma[z\mapsto\sigma_w(x)][x\mapsto\sigma^*_w(t)]}_{\psi[z/x]}$ is exactly the same model as  $\M|^\sigma_\psi$. Now we only need to consider whether the difference between $\sigma[x\mapsto \sigma_w(t)]$ and $\sigma[z\mapsto \sigma_{w}(x)][x\mapsto\sigma^*_w(t)]$ matters for the truth value of $\phi$. Note that $z$ does not occur in $\phi$ and by ($\dagger$) $\sigma_w(t)= \sigma^*_w(t)$, therefore the above difference in $\sigma$ does not affect the truth value of $\phi$. It follows that (1) iff (2), and this completes the proof.

\end{proof}
With the formulas above, we can translate $\PALAS$-formulas to $\ELAS$-formulas and eliminate the public announcement operators.

Based on the above theorem, we can define a translation $\trs$ to eliminate the announcement operators as in the standard \PAL\ using the left-to-right direction of the reduction axioms  \cite{DELbook,WC12}, and the following extra clause (where $z$ is fresh): 
$$\trs([!\psi][x:=t] \phi)= [z:=x][x:=t] \trs([!\psi[z/x]] \phi)$$
It is not hard to show that the translation preserves the equivalence of formulas.
\begin{proposition} \label{prop.exp}
For all $\phi\in\PALAS$: 
$\vDash \phi\lra \trs(\phi)$
\end{proposition}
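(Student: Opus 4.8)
The plan is to prove Proposition~\ref{prop.exp} by induction on the structure of the $\PALAS$-formula $\phi$, simultaneously establishing that $\trs(\phi)$ is a genuine $\ELAS$-formula (no announcement operators remain) and that $\vDash \phi \lra \trs(\phi)$. As is standard for $\PAL$-style reduction arguments, the subtlety is not the Boolean or epistemic cases but the induction metric: $\trs$ pushes announcements inward, and the clause for $[!\psi][!\chi]\phi$ via $\CCOM$ does not decrease ordinary formula length in the naive sense. So the first thing I would do is fix a well-founded complexity measure $c(\cdot)$ on $\PALAS$-formulas --- e.g. the lexicographically ordered pair consisting of (number of announcement operators, total size), with announcements counted with appropriate multiplicity, or the ``weight'' assignment from \cite{WC12,DELbook} --- and check that in every reduction-axiom clause the right-hand side's recursive arguments are strictly smaller in $c$ than the left. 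The only clause not already vetted in the literature is the new $\ASS$-clause $\trs([!\psi][x:=t]\phi)=[z:=x][x:=t]\trs([!\psi[z/x]]\phi)$, for which I must verify that $[!\psi[z/x]]\phi$ is $c$-smaller than $[!\psi][x:=t]\phi$: renaming $x$ to the fresh $z$ in $\psi$ does not change its complexity, and we have stripped one assignment operator from the ``blocking'' position, so under the standard weight function (which gives $[x:=t]$ and $\K_i$ comparable treatment) this goes through exactly as the $\AK$ clause does.

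With the measure in place, the induction is routine: for atomic $\phi$, $\trs(\phi)=\phi$; for $\neg$, $\land$, $\K_i$, and $[x:=t]$ that are not immediately preceded by an announcement, $\trs$ commutes and we invoke the IH on the (smaller) subformulas; for the five cases $[!\psi]p$, $[!\psi]\neg\phi$, $[!\psi](\phi\land\chi)$, $[!\psi]\K_i\phi$, $[!\psi][!\chi]\phi$ we use the left-to-right direction of $\ATOM,\NEG,\CON,\AK,\CCOM$ together with the (just-proven) soundness theorem for $\SPALAS$, which hands us $\vDash \phi_{\text{LHS}}\lra\phi_{\text{RHS}}$ so that semantic equivalence is preserved; and for $[!\psi][x:=t]\phi$ we use the $\ASS$-clause, whose validity is precisely the content of the theorem proved just above this proposition. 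In each non-atomic case one also needs that semantic equivalence is a congruence --- i.e. replacing a subformula by a provably/semantically equivalent one inside $\neg,\land,\K_i,[x:=t]$ preserves truth --- which holds because $\K_i$ and $[x:=t]$ are normal (the replacement-of-equals remark in the excerpt), and because $[!\psi]\cdot$ is monotone/congruential in its argument, so the IH can be applied underneath these connectives.

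I would write the congruence step out once as a small lemma (``if $\vDash\alpha\lra\beta$ then $\vDash\gamma[\alpha]\lra\gamma[\beta]$ for any $\PALAS$-context $\gamma$, in particular for $[!\psi]\cdot$''), and then the main induction is a clause-by-clause bookkeeping exercise. The one genuine obstacle --- and the place I would be most careful --- is confirming termination in the $\CCOM$ and $\ASS$ clauses together: $\CCOM$ replaces $\psi\wedge[!\psi]\chi$ as the new announced formula, which itself contains an announcement, so one must ensure the chosen weight function makes $[!(\psi\wedge[!\psi]\chi)]\phi$ strictly simpler than $[!\psi][!\chi]\phi$, and that this does not interact badly with the fresh-variable renaming introduced by $\ASS$ (it does not, since renaming is complexity-neutral and the fresh $z$ can always be chosen outside all relevant formulas, as already guaranteed by the side condition). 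Once the measure is pinned down, everything else is a direct appeal to the reduction-axiom validities established earlier in the section.
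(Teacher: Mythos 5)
Your proposal is correct and follows exactly the route the paper intends: the paper gives no explicit proof of Proposition~\ref{prop.exp}, deferring to the standard \PAL\ reduction argument of \cite{DELbook,WC12}, and your induction on a well-founded weight measure (with the check that the new $\ASS$-clause, like $\CCOM$, strictly decreases the measure) together with the already-established validity of the reduction axioms and a congruence lemma is precisely that argument spelled out. The one point you rightly flag as needing care --- choosing a weight function under which both $[!(\psi\land[!\psi]\chi)]\phi$ and $[!\psi[z/x]]\phi$ are smaller than their respective left-hand sides --- is the only non-routine content, and your treatment of it is sound.
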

\begin{theorem}
$\SPALAS$ is complete over arbitrary models.
\end{theorem}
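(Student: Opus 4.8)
The plan is to obtain completeness of $\SPALAS$ over arbitrary models by the standard reduction argument, bootstrapping on the completeness of $\SELAS$ (Theorem~\ref{thm.compelas}) and the translation $\trs$ just introduced. First I would verify that $\trs$ is in fact well-defined and terminating: although the new clause $\trs([!\psi][x:=t]\phi) = [z:=x][x:=t]\trs([!\psi[z/x]]\phi)$ pushes an announcement inside an assignment rather than eliminating it outright, the inner announcement $[!\psi[z/x]]$ is applied to $\phi$, which is strictly simpler than $[x:=t]\phi$, and $\psi[z/x]$ has the same complexity as $\psi$; so a suitable complexity measure (e.g. the one used in standard $\PAL$ completeness proofs, counting announcement-nesting lexicographically against formula size, adapted so that $[!\psi][x:=t]\phi$ outranks $[!\psi'][x:=t']\phi'$ whenever the latter's inner formula is smaller) strictly decreases. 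I would state this measure explicitly and check the remaining clauses (those inherited from standard $\PAL$) decrease it as usual.

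Next I would invoke Proposition~\ref{prop.exp}, which already tells us $\vDash \phi \lra \trs(\phi)$ for every $\phi \in \PALAS$, and its syntactic counterpart: each reduction-axiom equivalence used to define $\trs$ is a theorem of $\SPALAS$ (the first five are the standard $\PAL$ reduction axioms, now proved sound, and $\ASS$ was just shown sound; since they are equivalences in the system, not merely valid, replacement of provable equivalents — admissible because $\K_i$ and $[x:=t]$ are normal, as noted after Theorem~\ref{thm.compelas}, and $[!\psi]$ is likewise congruential via $\NEG,\CON,\AK$ — yields $\vdash_{\SPALAS} \phi \lra \trs(\phi)$). This is the key lemma: provable equivalence of every $\PALAS$-formula with its announcement-free translation.

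With that in hand the argument closes quickly. Suppose $\vDash \phi$ for $\phi \in \PALAS$. Then $\vDash \trs(\phi)$ by Proposition~\ref{prop.exp}, and since $\trs(\phi) \in \ELAS$, completeness of $\SELAS$ (Theorem~\ref{thm.compelas}) gives $\vdash_{\SELAS} \trs(\phi)$, hence $\vdash_{\SPALAS} \trs(\phi)$ as $\SPALAS$ extends $\SELAS$. Combining with the key lemma $\vdash_{\SPALAS} \phi \lra \trs(\phi)$ and modus ponens yields $\vdash_{\SPALAS} \phi$. For strong completeness one runs the same argument on a finite conjunction of premises, or notes that $\trs$ commutes with the relevant logical operations so that $\Gamma \vDash \phi$ reduces to $\trs[\Gamma] \vDash \trs(\phi)$ in $\ELAS$.

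The main obstacle I anticipate is purely bookkeeping around the new clause: one must be careful that the freshness side-condition on $z$ is genuinely respected throughout the recursion (each application of $\ASS$ may introduce a new bound variable, and these must stay pairwise fresh and fresh for the whole formula), and that the complexity measure is robust to the fact that $\ASS$ does not immediately remove an announcement but relocates it while shrinking its scope-formula. I would devote most of the written proof to pinning down this measure and confirming the recursion halts; the rest is the routine reduction schema. I would also remark that soundness of $\SPALAS$ over arbitrary models (already established) plus this completeness gives the usual corollary that $\SPALAS$ exactly axiomatizes the valid $\PALAS$-formulas, and that an entirely parallel argument with $\SELASf$ in place of $\SELAS$ would give completeness of $\SPALASf$ over epistemic models, should that be wanted.
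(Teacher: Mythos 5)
Your proposal is correct and follows essentially the same route as the paper: the four-step reduction chain $\vDash\phi \Rightarrow \vDash\trs(\phi) \Rightarrow \vdash_{\SELAS}\trs(\phi) \Rightarrow \vdash_{\SPALAS}\trs(\phi) \Rightarrow \vdash_{\SPALAS}\phi$, using Proposition~\ref{prop.exp}, Theorem~\ref{thm.compelas}, the fact that $\SPALAS$ extends $\SELAS$, and the provable equivalence $\vdash_{\SPALAS}\trs(\phi)\lra\phi$ from the reduction axioms. Your additional care about termination of $\trs$ and the freshness of $z$ fills in details the paper leaves implicit but does not change the argument.
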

\begin{proof}The proof is done by the following reduction.
 $$\vDash\phi\implies\vDash \trs(\phi)\stackrel{}{\implies}\vdash_{\SELAS} \trs(\phi)\implies\vdash_{\SPALAS} \trs(\phi)\stackrel{}{\implies}\vdash_{\SPALAS}\phi$$
 The first step is due to Proposition \ref{prop.exp}. The second step is due to Theorem \ref{thm.compelas}. The third step is due to the fact that $\SPALAS$ is an extension of $\SELAS$, and the last step is due to the reduction axioms in the system that you can show $\vdash_\SPALAS \trs(\phi)\lra \phi.$ 
\end{proof}
Similarly we can show that:
\begin{theorem}
$\SPALASf$ is complete over epistemic models. 
\end{theorem}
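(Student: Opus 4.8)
The plan is to mimic exactly the reduction argument used for $\SPALAS$, upgrading each ingredient from ``arbitrary models'' to ``epistemic models'' ($\mathtt{S5}$ frames). The skeleton is the same chain of implications: for any $\PALAS$-formula $\phi$ valid over epistemic models,
\[
\vDashf \phi \implies \vDashf \trs(\phi) \implies \vdash_{\SELASf} \trs(\phi) \implies \vdash_{\SPALASf} \trs(\phi) \implies \vdash_{\SPALASf} \phi.
\]
I would first observe that Proposition~\ref{prop.exp} holds in particular over epistemic models (the translation equivalence is a validity over \emph{arbitrary} models, hence over the subclass of epistemic models), so $\vDashf\phi$ iff $\vDashf\trs(\phi)$; this gives the first step. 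The second step is Theorem~\ref{thm.compelas}, the strong completeness of $\SELASf$ over epistemic models, applied to the announcement-free formula $\trs(\phi)$. The third step is immediate since $\SPALASf$ extends $\SELASf$ by definition. The last step requires $\vdash_{\SPALASf}\phi\lra\trs(\phi)$, which follows by induction on $\phi$ using the left-to-right directions of the reduction axioms $\ATOM,\NEG,\CON,\AK,\CCOM,\ASS$ — all of which are in $\SPALASf$ — together with the admissible rule of replacement of equals (provable since $\K_i$, $[x:=t]$, and $[!\psi]$ are all normal, and unaffected by adding $\mathtt{S5}$ axioms).

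Two points need a brief check rather than being taken entirely for granted. First, one must confirm that the reduction axioms remain \emph{sound} over epistemic models; but $\SPALASf = \SPALAS + \{\AxTrK,\AxTrans,\AxEuc\}$, and each reduction axiom was already shown valid over arbitrary models in the soundness theorem for $\SPALAS$, so a fortiori valid over epistemic models — the extra $\mathtt{S5}$ axioms are themselves sound over epistemic models by Theorem~\ref{thm.sound}. In particular the validity of $\AK$, $\CCOM$, and the new axiom $\ASS$ does not depend on the frame class, since their proofs (e.g.\ the Claim in the soundness proof for $\ASS$) only manipulate the assignment $\sigma$ and the worlds pointwise. Second, one must note that the submodel operation $\M|^\sigma_\psi$ preserves the property of being an epistemic model: restricting an equivalence relation $R_i$ to $W'\times W'$ still yields an equivalence relation, so $[!\psi]$ is a well-defined operation on epistemic models and the semantics of $\PALAS$ over epistemic models is the expected one.

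The bulk of the work — and the only mildly delicate step — is the final induction $\vdash_{\SPALASf}\phi\lra\trs(\phi)$, specifically the clause for $[!\psi][x:=t]\phi$, where the translation pushes the announcement \emph{inside} via $\ASS$ after relettering $x$ to a fresh $z$. Here I would appeal to the Relettering proposition and to $\AxSUBtoAS$/$\mathtt{SUBASEQ}$ to handle the bound-variable renaming, exactly as in the $\SPALAS$ case; nothing about this argument is sensitive to whether the epistemic relations are equivalence relations, so it transfers verbatim. Hence I do not expect any genuine obstacle: the theorem is ``completeness by reduction'' and every lemma it invokes either is already stated in the $\mathtt{S5}$ version (Theorem~\ref{thm.compelas}) or is frame-class-independent. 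The proof is therefore essentially a one-line remark that the $\SPALAS$ argument relativizes.

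\begin{proof}
The proof is the same reduction as in the previous theorem, now over epistemic models:
\[
\vDashf\phi\implies\vDashf\trs(\phi)\implies\vdash_{\SELASf}\trs(\phi)\implies\vdash_{\SPALASf}\trs(\phi)\implies\vdash_{\SPALASf}\phi.
\]
The first step is Proposition~\ref{prop.exp}, whose equivalence $\vDash\phi\lra\trs(\phi)$ holds over all models and hence over epistemic models; note also that $\M|^\sigma_\psi$ is again an epistemic model, since the restriction of an equivalence relation to $W'\times W'$ is an equivalence relation, so the semantics of $\PALAS$ over epistemic models is well-defined. The second step is the strong completeness of $\SELASf$ over epistemic models (Theorem~\ref{thm.compelas}) applied to the announcement-free $\trs(\phi)$. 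The third step holds because $\SPALASf$ extends $\SELASf$. For the last step, one shows $\vdash_{\SPALASf}\phi\lra\trs(\phi)$ by induction on $\phi$: all reduction axioms $\ATOM,\NEG,\CON,\AK,\CCOM,\ASS$ belong to $\SPALASf$ and are sound over epistemic models (they were shown valid over arbitrary models, and the added $\mathtt{S5}$ schemata are sound over epistemic models by Theorem~\ref{thm.sound}); the inductive clauses combine them with the admissible replacement-of-equals rule, and the clause for $[!\psi][x:=t]\phi$ uses $\ASS$ together with Relettering exactly as in the $\SPALAS$ case, none of which depends on the frame class.
\end{proof}
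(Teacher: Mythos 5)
Your proof is correct and follows exactly the route the paper intends: the paper itself only says ``Similarly we can show that,'' meaning the same reduction chain, and your version spells out precisely the points that need relativizing (validity of $\trs(\phi)\lra\phi$ restricts to epistemic models, $\M|^\sigma_\psi$ preserves equivalence relations, the reduction axioms remain sound a fortiori, and Theorem~\ref{thm.compelas} supplies completeness of $\SELASf$). No gaps; this matches the paper's argument.
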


\section{Adding Event Models}\label{sec.em}
In this section, we generalize the public announcements to event models proposed in \cite{BMS}. We first consider the event models without factual changes. 
\subsection{Language and Semantics}
\begin{definition}[Event model]
An event model $\E$ with respect to a given language $\mathbf{L}$ is a triple: $\lr{E, \tot, Pre}$ where:  
\begin{itemize}
\item $E$ is a finite non-empty set of events;
\item $\tot:\Ag\to 2^{E\times E}$ assigns a relation to each agent;
\item $Pre: E\to \mathbf{L}$ assigns each event a precondition formula.
\end{itemize}
A pointed event model $\E, e$ is an event model with a designated event. An epistemic event model is an event model where the accessibility relations are equivalence relations. 
\end{definition} 

We often write $\tot_i$ for $\tot(i)$ to denote the relation for $i$. 

\begin{definition}[Update product $\otimes$]Given $\Nm$, $\Ag$, a model $\M=\lr{W, D, R, \eta}$, an assignment $\sigma$,  and an event model $\E=\lr{E, \tot, Pre}$ with respect to a given language $\mathbf{L} $, the updated model $(\M\otimes \E)^\sigma$ is a tuple $\lr{W', D, R', \rho', \eta'}$ where:
\begin{itemize}
\item $W'=\{(w, e)\mid \M,w,\sigma\vDash Pre(e) \}$; 
\item $(s,e)R'_i(s',e')$ iff $sR_i s'$ and $e\tot_ie'$;  
\item $\eta'(a,(w,e))=\eta (a, w)$;
\item $\rho'(P,(w,e))=\rho(P,w)$.
\end{itemize} 
\end{definition}
Note that $\sigma$ is necessary in defining the updated model. 

\begin{definition}[Language of \DELAS]
The language of Dynamic Epistemic Logic with Assignments (\DELAS) is defined below:  
\begin{align*}
t&::= x\mid a\\
\varphi& :: = t \approx	t \mid P\vec{t}\mid (\varphi \land \varphi) \mid \lnot \varphi \mid \K_i\varphi \mid [x:= t] \varphi \mid  [\E,e]\phi
\end{align*}
where $x\in\Var$, $a\in \Nm$, $P\in\Ps$, $i\in \Ag$, and $\E,e$ is a pointed event model w.r.t. \DELAS.\footnote{The event model and formulas are defined by a mutual induction, cf. \cite{BMS}. } 
\end{definition}
As in \cite{BMS}, we can compose two event models into one. 
\begin{definition}[Composition of event models]
Let $\E=\lr{E,\tot,Pre}$ and $\E'=\lr{E',\tot',Pre'}$ be two event models. Then the composition of $\E$ and $\E'$ is $\E\circ\E'=\lr{E'',\tot'', Pre''}$ where 
\begin{itemize}
    \item $E''= E\times E'$
    \item $(e,e')\tot_i''(f,f')\iff e\tot_i f$ and $e'\tot_i' f'$
    \item $Pre''(e,e')=Pre(e)\wedge[\E,e] Pre'(e')$
\end{itemize}
The composition of two pointed model $\E,e$ and $\E',e'$(denoted as $(\E,e)\circ(\E',e')$) is defined as the pointed model $\E\circ\E',(e,e')$.
\end{definition}
The semantics is as in the standard event-model $\DEL$.
\begin{definition}[Semantics]
We give the truth condition for the event updates. 
$$
\begin{array}{|rcl|}
\hline
\M,w,\sigma\vDash [\E, e]\phi &\Leftrightarrow& \M,w, \sigma\vDash Pre(e) \Rightarrow
\M\otimes\E,(s,e),\sigma\vDash \phi\\
\hline
\end{array}
$$
\end{definition}

With event models, we can capture  non-trivial \textit{de re} dynamics. For example, agent $1$ is told a password with agent $2$ around, but agent 2 is not sure whose password it is: it could be Cindy's (c) or Dave's (d). The following event model captures such an event $\E$ (with precondition specified):
$$\xymatrix{
\underline{e: x\approx c }\ar@(ur,ul)|{1,2}\ar@{<->}[rr]|2&{ }&{f: x\approx d} \ar@(ur,ul)|{1,2}\\ 
}
$$
The underlining event is the real event. Suppose initially agent 1 and agent 2 have no idea about the possible passwords of $c$ and $d$ (thus think all the natural numbers are possible), the (infinite) initial model $\M,s$ may look like below: 

$$\xymatrix{
\underline{s: c\approx 12, d\approx 34}\ar@(ur,ul)|{1,2}\ar@{<->}[r]|{1,2}& {t: c\approx 4, d\approx 12}\ar@(ur,ul)|{1,2}\ar@{<.>}[r] & \cdots
}
$$
According to the semantics, we can verify 
$$\M,s\vDash [x:=c] [\E,e](\Kv_1 c\land \neg \Kv_1 d\land  \neg \Kv_2 c\land \neg\Kv_2 d\land  \K_2(\Kv_1c \lor \Kv_1 d)).$$

As a variant mentioned in the introduction, we can capture the event where agent 1 and agent 2 are told two numbers (the passwords of $c$ and $d$) such that agent 1 knows which is which but agent 2 does not know it.   
$$\xymatrix{
\underline{e: x\approx c, y\approx d }\ar@(ur,ul)|{1,2}\ar@{<->}[rr]|2&{ }&{f: x\approx d, y\approx c} \ar@(ur,ul)|{1,2}\\ 
}
$$
We can verify:
$$\M,s \vDash [x:=c][y:=d] [\E,e](\Kv_1 c\land \Kv_1 d\land  \neg \Kv_2 c\land \neg\Kv_2 d \land  \K_2(\Kv_1c \land \Kv_1 d)).$$

\subsection{Axiomatization}
We define the proof system $\SDELAS$ ($\SDELASf$) as the proof system obtained by extending $\SELAS$ ($\SELASf$) with the following reduction axioms:
\begin{center}
 \begin{tabular}{ll}
\hline
Axiom Schemas &\\
\hline
\UATOM & $[\E,e]p\leftrightarrow(Pre(e)\rightarrow p)$ \text{ ($p$ is atomic)}\\
\UNEG & $[\E,e]\neg\varphi\leftrightarrow (Pre(e)\rightarrow\neg[\E,e]\varphi)$\\
\UCON & $[\E,e](\varphi\land\psi)\leftrightarrow([\E,e]\varphi\land[\E,e]\psi)$\\
\UAK & $[\E,e]\K_i\varphi\leftrightarrow(Pre(e)\rightarrow\bigwedge_{e\tot_i f}[\E,f]\varphi)$\\
\UCCOM & $[\E,e][\E',e']\varphi\leftrightarrow[\E\circ \E',(e,e')]\varphi$\\
\UASS& $[\mathcal{E},e][x:=t]\varphi\leftrightarrow[z:=x][x:=t][\mathcal{E}',e]\varphi$\\
\hline
\end{tabular}
\end{center}
where in \UASS, $z$ does not occur in $\varphi$, $t$ or in $Pre_\E(e)$ for all $e\in\mathcal{E}$. $\mathcal{E}'$ is an event model with the same domain and relations as $\E$ and for all $f$ in $\E$, $Pre_{\E'}(f)=Pre_\E(f)[z/x]$. 

Note that when $\E$ is a singleton model with precondition $\psi$ then $\UASS$ coincides with $\ASS$. 

\begin{theorem}
$\SPALAS$ is sound over arbitrary models. 
\end{theorem}
\begin{proof}
We only need to check $\UASS$. The proof is similar to the proof of the validity of $\ASS.$ The idea is to use a fresh variable to store the initial value of $x$, and replace the free occurrences of $x$ by $z$ in the preconditions in $\E$ such that all these preconditions will be evaluated according to the initial value of $x$. Note that the reduction also depends on the fact that the update itself does not change the value of $t$ or $x$.
\end{proof}

Similarly, as in the previous section, we have the completeness:
\begin{theorem}
$\SDELAS$ is complete over arbitrary models and $\SDELASf$ is complete over epistemic models. 
\end{theorem}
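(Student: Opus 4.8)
The plan is to mirror the reduction argument used for $\SPALAS$ in Section~\ref{sec.pa}. First I would define a translation $\trs$ from $\DELAS$-formulas to $\ELAS$-formulas by recursion, using the left-to-right directions of \UATOM, \UNEG, \UCON, \UAK, \UCCOM\ to push event operators inward and eliminate them at the atomic level, together with the extra clause $\trs([\E,e][x:=t]\phi)=[z:=x][x:=t]\,\trs([\E',e]\phi)$ for a fresh $z$, where $\E'$ is the event model of \UASS\ (same domain and relations as $\E$, with $Pre_{\E'}(f)=Pre_\E(f)[z/x]$). The \UCCOM\ clause first composes nested event models via $\circ$ and then recurses.

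The only delicate point is that $\trs$ is well defined, i.e.\ that this recursion terminates. As in the standard event-model DEL completeness proof (cf.\ \cite{BMS,DELbook,WC12}), I would fix a complexity measure $c(\cdot)$ on $\DELAS$-formulas and check that every rewrite strictly decreases it. Three features must be arranged: (i) renaming a free variable leaves the measure of a precondition unchanged, so the substituted model $\E'$ in the \UASS-clause does not cause a blow-up; (ii) $c$ dominates the finite $\tot_i$-branching of event models, which is what the \UAK-clause consumes; and (iii) $c$ is set up so that $[\E\circ\E',(e,e')]\phi$ is strictly simpler than $[\E,e][\E',e']\phi$, despite $Pre''(e,e')=Pre(e)\wedge[\E,e]Pre'(e')$ nesting an event operator in the precondition --- this is the classical obstruction, handled exactly as in \cite{DELbook}. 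For the \UASS-clause, note that the formula occurring \emph{inside} the event operator drops from $[x:=t]\phi$ to $\phi$ while the preconditions only change by a variable renaming, so the relevant component of $c$ strictly decreases. A lexicographic measure combining event-nesting depth (weighted by event-model size and precondition complexity) with ordinary formula length works; this bookkeeping is where essentially all the work lies, and it is the step I expect to be the main obstacle.

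Given a well-defined $\trs$, the rest is routine and transfers verbatim from the $\SPALAS$ case. By soundness of the reduction axioms (established just above) one gets $\vDash\phi\lra\trs(\phi)$ for all $\phi\in\DELAS$ --- the analogue of Proposition~\ref{prop.exp} --- and, since each reduction axiom is a theorem of $\SDELAS$ and replacement of provably equivalent subformulas is admissible, also $\vdash_{\SDELAS}\phi\lra\trs(\phi)$. Combining with Theorem~\ref{thm.compelas} (strong completeness of $\SELAS$) yields, for any valid $\phi$,
$$\vDash\phi\implies\vDash\trs(\phi)\implies\vdash_{\SELAS}\trs(\phi)\implies\vdash_{\SDELAS}\trs(\phi)\implies\vdash_{\SDELAS}\phi,$$
so $\SDELAS$ is complete over arbitrary models. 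The same chain with $\SELASf$/$\SDELASf$ in place of $\SELAS$/$\SDELAS$ gives completeness over epistemic models, using that $\SDELASf$ extends $\SELASf$ and that the reduction axioms remain valid there because the update product $\otimes$ preserves equivalence relations.
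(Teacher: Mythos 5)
Your proposal follows essentially the same route as the paper, which proves this theorem by the same reduction chain used for $\SPALAS$: translate away the event operators via the reduction axioms (including the new $\UASS$ clause) and appeal to Theorem \ref{thm.compelas}. The paper leaves the termination of the translation implicit, so your explicit discussion of the complexity measure is a welcome elaboration rather than a divergence.
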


\subsection{Adding factual changes}
Finally, let us consider event models with factual changes inspired by  \cite{LCC}.

\begin{definition}[Event model with factual changes]
An event model with factual changes $\E$ w.r.t.\ language $\mathbf{L}$ is a tuple: $\lr{E, \tot, Pre, Pos}$ where $E, \tot, Pre$ are defined as before, and
\begin{itemize}
\item $Pos: \Nm\times E\to \Nm$ is a function that maps all but finite names to themselves.
\end{itemize}
\end{definition} 
Intuitively, a post condition changes the value of one name to the value of another one, e.g., an event may switch the value of $c$  and $d$ by setting $Pos(c,e)=d$ and $Pos(d, e)=c$.\footnote{It is more interesting to have function symbols in the language to change the value of $a$ to the value of a term, which we leave for future work.}

Accordingly, we also incorporate the factual change in the definition of the update: 
\begin{definition}[Update product with factual change]Given $\Nm$, $\Ag$, a model $\M=\lr{W, D, R, \rho, \eta}$, an assignment $\sigma$,  and an event model $\E=\lr{E, \tot, Pre, Pos}$ w.r.t. $\mathbf{L} $, the updated Kripke model $(\M\otimes \E)^\sigma$ is a tuple $\lr{W', D, R', \rho', \eta'}$ where: $W', D, R', \rho'$ are as before, and 
\begin{itemize}
\item $\eta'(a,(w,e))=\eta (Pos(a,e), w)$
\end{itemize} 
\end{definition}
We will show that with factual changes, $[\E,e]$ can still be eliminated. 

Given an event model with postconditions $\E$, we first lift the $Pos_\E$ to the function $Pos^+_\E: (\Nm\cup\Var)\times E\rightarrow \Nm\cup\Var$ where for all $x\in\Var$ and $e\in E$, $Pos_\E^+(x,e)=x$ and for all $a\in\Nm$, $Pos_\E^+(a,e)=Pos_\E(a,e)$. 

Now we can state the new reduction axiom.

\begin{center}
 \begin{tabular}{ll}
\hline
\UASS'& $[\E,e][x:=t]\varphi\leftrightarrow[z:=x][x:=Pos^+_\E(t, e)][\E',e]\varphi$\\
\hline
 \end{tabular}
 \end{center}
where $z$ is fresh. $\mathcal{E}'$ is defined as before with the new component $Pos_{\E'}(a,e)=Pos_{\E}(a,e)$, i.e., the postcondition is unchanged.  

It is not hard to verify that $\UASS'$ is valid, and we can use it to give a complete axiomatization as before given the event models with factual changes.

\section{Future directions}\label{sec.con}
In this work, we propose a lightweight dynamic epistemic framework to capture \textit{de re} knowledge and updates. In particular, $\ELAS$ can be viewed as a more powerful alternative to the standard epistemic logic, which can also pre-encode \textit{de re} dynamics. There are numerous directions for further work, inspired by the large body of research on the standard (dynamic) epistemic logic. Here we just list a few. 
\begin{itemize}
\item As in \cite{WangWeiSel}, we can try to add function symbols and allow varying domain in the model.
\item Adding the usual common knowledge operator will create complications in axiomatization as in the case of \DEL, but the \textit{de re} common knowledge comes for free.
\item As in \cite{LCC}, we can try to build a more general framework based on \textit{dynamic logic}. It makes particular sense since the assignment operator is actually from dynamic logic.  
\item We can develop the counterparts of the non-reductive axiomatizations in \cite{WC12,WA13}.
\item We can try to find the boundary of the decidability given different frame conditions. We know \texttt{S5} is bad but \texttt{T} is good \cite{WangWeiSel}, what about \texttt{S4}?  
\item It is also interesting to see whether our framework can capture all the intuitive \textit{de re} updates. We think the answer is negative, e.g., it seems hard to capture the private announcement of some value using finite event models. It may lead to further extensions of the framework. 
\end{itemize}
\paragraph{Acknowledgement} Michael Cohen thanks the Pre-Doctoral Fellowship Program of Stanford Center at Peking University, which made this joint work possible. Yanjing Wang acknowledges the support of NSSF grant 19BZX135. 

\bibliographystyle{eptcs}
\bibliography{sgwyj_KH}

\end{document}